\newcolumntype{C}[1]{>{\centering\arraybackslash}m{#1}}
\def\SS{{\mathcal {S}}}
\def\L{{\mathbf L}}
\def\R{{\mathbb R}}
\def\E{ \mathbb{E} }
\def\Var{ \text{Var} }
\def\y{{\mathbf y}}
\def\b{{\mathbf b}}
\def\x{{\mathbf x}}
\def\f{{\mathbf f}}
\def\z{{\mathbf z}}
\def\SS{ \mathcal{S} }
\def\NN{ \mathcal{N} }
\def\1{\mathbbm{1}}
\newcommand{\tr}[1]{\operatorname{tr}\left(#1\right)}
\def\bomega{ \boldsymbol{\omega} }
\def\bOmega{ \boldsymbol{\Omega} }
\def\b0{ \mathbf{0} }
\newtheorem{corollary}{Corollary}
\newtheorem{theorem}{Theorem}
\newtheorem{remark}{Remark}
\definecolor{salmon}{rgb}{1.0, 0.55, 0.41}
\definecolor{dodgerblue}{rgb}{0.12, 0.56, 1.0}
\definecolor{orchid}{rgb}{0.85, 0.44, 0.84}
\definecolor{limegreen}{rgb}{0.2, 0.8, 0.2}
\def\x{{\mathbf x}}
\def\L{{\cal L}}
\title{Efficient Gaussian Process Learning via Subspace Projections}
\begin{document}
\ninept
\maketitle
\def\thefootnote{*}\footnotetext{The authors are listed in alphabetical order.}\def\thefootnote{\arabic{footnote}}

\begin{abstract}
We propose a novel training objective for GPs constructed using lower-dimensional linear projections of the data, referred to as \emph{projected likelihood} (PL). We provide a closed-form expression for the information loss related to the PL and empirically show that it can be reduced with random projections on the unit sphere. We show the superiority of the  PL, in terms of accuracy and computational efficiency, over the exact GP training and the variational free energy approach to sparse GPs over different optimisers, kernels and datasets of moderately large sizes.
\end{abstract}
\begin{keywords}
Sparse Gaussian processes, time series
\end{keywords}
\section{Introduction}
\label{sec:intro}

The Gaussian process (GP) \cite{williams1995gaussian} is the \emph{de facto} approach to time series, especially in regimes of unevenly-sampled observations, missing data and when uncertainty quantification is required. GPs have been applied to EEG \cite{caro2022modeling}, audio \cite{alvarado2016gaussian}, seismology \cite{gentile2020gaussian}, climate \cite{andersson2023environmental}, body-motion sensing \cite{wang2007gaussian} and astronomy \cite{covino2022detecting}. What makes GPs suitable for a wide range of time series applications is a comprehensive procedure for designing and choosing tailored covariance kernels, which allow practitioners to incorporate inductive biases, thus elegantly blending prior knowledge with rich relationships learnt from data.

Conceptually, GP regression models construct a Gaussian prior distribution on the infinite-dimensional space of functions, defined by a mean function (usually set to zero) and a covariance kernel \cite{Rasmussen:2006}. Then, observations are considered as parts of the latent true function and used to update the GP prior into a GP posterior. For the model to accurately represent the data, the prior needs to be calibrated with respect to the available data, meaning that particular mean and covariance functions need to be chosen. Denoting the number of observations by $n$, GP training is performed by means of maximum likelihood (ML) at a cost $\mathcal{O}(n^3)$, which renders this approach unfeasible for more than a few thousand datapoints.

A vast collection of approximate-likelihood methods has been developed to alleviate the computational cost related to standard ML-based GP training \cite{quinonero2005unifying,rossi21a}. These approaches either directly replace the inverse covariance matrix with a lower-rank approximation, or assume the conditional independence of the data under a set of (trainable) \emph{inducing variables}, which also results in a reduced-complexity objective. Despite the proven success of the inducing-variable approach to GPs, termed sparse GPs, they are prone to: i) provide biased learnt hyperparameters for a reduced computational cost, and ii) only achieve superior computational efficiency asymptotically (i.e., for $n$ sufficiently large). 

For instance, \cite{bauer2016understanding} states that the fully independent training conditional (FITC) method  \cite{snelson2006sparse} is known to underestimate the noise variance, while the variational free energy (VFE) method \cite{titsias2009variational} is known to overestimate it. For VFE, this hyperparameter mismatch comes from maximising the evidence lower bound, whose bias follows from the restricted choice of the approximating variational distribution \cite{Blei2017variational}. Furthermore, due to the more complex training loss of sparse GPs, involving several Cholesky decompositions, VFE's asymptotic efficiency is only realised for $n$ sufficiently large. 

To address the need for a GP training method that is both computationally efficient and accurate for moderately large datasets, we propose a novel training objective, termed \emph{projected likelihood} (PL), given by the likelihood of a surrogate GP over a lower-dimensional linear projection of the data. We quantify the information loss related to the PL and show it can be reduced using $k$ random projections, thus attaining a training cost $\mathcal{O}(kn^2)$, where $k\ll n$. Despite this cost being seemingly larger than the standard $\mathcal{O}(nm^2)$ cost of the VFE sparse GP with $m$ inducing points, we empirically show that the PL is computationally efficient, due to requiring fewer optimisation steps, and accurate in terms of achieving solutions that are closer to those of the exact GP method. The proposed PL method is experimentally validated using real-world and synthetic data of varying magnitude, as well as different kernels and optimisers.

\section{Gaussian processes for regression}

\subsection{Training and inference}

A \emph{Gaussian process} (GP) is a stochastic process 
$\{ f(x) : x \in \R^d \}$ such that any finite collection 
$(f(x_1), \dots, f(x_n))$ follows a $n$-multivariate Gaussian distribution. 
We adopt the notation
\[
f(x) \sim \mathcal{GP}\big(m(x), k(x,x')\big),
\]
where $m(x) = \mathbb{E}[f(x)]$ is the mean function---usually set to zero---and 
$k(x,x') = \operatorname{Cov}(f(x), f(x'))$ is the covariance kernel.

Given training data $\mathcal{D} = \{(x_i, y_i)\}_{i=1}^n$, assuming a Gaussian likelihood 
$y_i = f(x_i) + \varepsilon$, $\varepsilon \sim \mathcal{N}(0,\sigma^2_{\text n})$, and denoting the observations as stacked vectors $\x$ and $\y$, the marginal likelihood is
\begin{equation}
    p(\y | \x, \theta) 
= \mathcal{N}\!\big(\y| \, m_\x, \, K_{\x} + \sigma^2_{\text n} \mathbf{I}\big),
\label{eq:GP_prior}
\end{equation}
where $\theta$ are the kernel hyperparameters, and we used the compact notation $m_\x=m(\x)=[m(x_1),\ldots,m(x_n)]^T$ and $K_{\x} = K(\x,\x)$, where $(K(\x,\x))_{ij} = k(x_i,x_j)$. We do not denote the explicit dependence of $m$ and $k$ on $\theta$ for the sake of notation simplicity.

Training is performed by minimising the negative log likelihood (NLL) $\L (\theta) = -\log p(\y  |\x, \theta) $ given by (assuming $m_\x=0$):
\begin{equation}
    \L (\theta) = \tfrac{1}{2}\y^\top (K_{\x} + \sigma^2_{\text n}\mathbf{I})^{-1}\y
   + \tfrac{1}{2}\log |K_{\x} + \sigma^2_{\text n}\mathbf{I}| + \frac{n}{2}\log 2\pi.\label{eq:nll}
\end{equation}

Furthermore, for test inputs $\x_*$, the GP's predictive posterior distribution is given by \cite{Rasmussen:2006} $p(\f_* | \x, \y, \x_*) = 
\mathcal{N}\!\big(\mu_*, \Sigma_*\big)$, with 
\begin{align}
\mu_* &= K_{\x_*\x}(K_{\x}+\sigma^2_{\text n}\mathbf{I})^{-1}\y, \\
\Sigma_* &= K_{\x_*} -
            K_{\x_*\x}\,(K_{\x}+\sigma^2_{\text n} \mathbf{I})^{-1}K_{\x\x_*},
\end{align}
where $K_{\x_*} = K(\x_*,\x_*)$ and $K_{\x_*\x}^\top = K_{\x\x_*} = K(\x,\x_*)$.

\subsection{Sparse approximations}

To alleviate the $\mathcal{O}(n^3)$ cost of eq.~\eqref{eq:nll}, computationally-efficient approximations of $\L(\theta)$ rely on $m$ \emph{inducing variables}, which summarise the observations sufficiently well \cite{rossi21a} . Specifically, the reduced computational cost comes from assuming that the $n$ observations are conditionally independent given the $m$ inducing variables. This approach, termed  \emph{sparse GPs}, includes the early Deterministic Training
Conditional (DTC) \cite{seeger03a} and the Fully Independent Training Conditional (FITC) \cite{snelson2006sparse}. For a revision of classic sparse GP methods and how they relate to one another, see \cite{quinonero2005unifying}. 

A cornerstone in sparse GP approximations is the Variational Free Energy (VFE) method \cite{titsias2009variational}, which has become the basis for subsequent approaches. VFE models the inducing variables by a variational distribution $q(\mathbf{u})$, thus approximating the posterior over $f$ as $q(f) = \int p(f | \mathbf{u}) q(\mathbf{u}) d\mathbf{u}$. Training then involves finding the variational parameters by maximising the evidence lower bound (ELBO):
\begin{equation}
    \mathcal{L} = \mathbb{E}_{q(f)}[\log p(\y | f)] - \mathrm{KL}[q(\mathbf{u}) \| p(\mathbf{u})],
\end{equation}
which ensures the approximation is close to the true posterior while reducing complexity to $\mathcal{O}(nm^2)$. Denoting the locations of the inducing variables by $\bar\x\in\R^{d\times m}$, the ELBO is given by 
\begin{align}
\mathcal{L}_{\mathrm{VFE}} &=
-\frac{1}{2} \Big[ 
\y^\top (Q_{\x} + \sigma^2_{\text n} \mathbf{I})^{-1} \y 
+ \log |Q_{\x} + \sigma^2_{\text n} \mathbf{I}| 
+ n \log 2\pi
\Big]\nonumber \\
&- \frac{1}{2\sigma^2_{\text n}} \mathrm{tr}(K_{\x} - Q_{\x}),
\end{align}
where $Q_{\x} = K_{\x\bar\x} K_{\bar\x}^{-1} K_{\bar\x\x}, K_{\bar\x} = k(\bar\x,\bar\x)$ and $K_{\x\bar\x} = k(\x,\bar\x)$.

Since $\bar\x$ has to be learnt, the introduction of additional variational parameters leads to learning biased hyperparameters and a computational efficiency that only arises for sufficiently large $n$.

\section{The projected likelihood method}

\subsection{Projections onto the unit sphere $\SS^{n-1}$}

We consider $k\ll n$ linear projections of the observations, interpreted as a $k$-dimensional, lossy compression of the $n$-dimensional data distribution, from which we can derive a cheaper training objective approximating $\L(\theta)$ in eq.~\eqref{eq:nll}. We conjecture that, if the relevant directions of $K_{\x}$ are preserved in the projected data, the hyperparameters can be recovered with minimal information loss and computational overhead.

We project $\y\in\R^n$ onto a set of directions $\bomega_1,\ldots,\bomega_k\in\SS^{n-1}$, where $\SS^{n-1}$ denotes the unit sphere in $\R^n$, obtaining
\begin{equation} 
\label{eq:projections} \z = [\bomega_1^\top\y,\ldots,\bomega_k^\top\y]\in\R^k.
\end{equation}
Assuming $\y$ is distributed by a zero-mean multivariate normal (MVN) distribution with a positive-definite invertible covariance matrix $K_{\x} +\sigma^2_{\text n}\mathbf{I}$, then $\z$ in eq.~\eqref{eq:projections} follows a $k$-dimensional MVN given by
\begin{equation}
    p(\z\vert \x, \theta) = \NN(\z \vert 0, \bOmega^\top(K_{\x} +\sigma^2_{\text n}\mathbf{I})\bOmega),
\label{eq:GP_pdf_projected}
\end{equation}
where $\bOmega\in\R^{n\times k} = [\bomega_1,\bomega_2,\ldots,\bomega_k]$ is the projection matrix.
\begin{remark}
For  eq.~\eqref{eq:GP_pdf_projected} to be non-degenerate, it is required that $\text{rank}(\bOmega)=k$, or equivalently, the projection vectors $\{\bomega_1,\ldots,\bomega_k\}\subset\SS^{n-1}$ have to be linearly independent.
\end{remark}

Our training objective will be the (negative logarithm) of the \emph{projected likelihood} (PL), which has a cost $\mathcal{O}(kn^2)$, and is given by 
\begin{align}
\L_{\text{PL}}(\theta) 
&= \frac{1}{2} \Bigg[ 
\z^\top \left( \mathbf{\Omega}^\top 
(K_{\x} + \sigma^2_{\text n} \mathbf{I})^{-1} 
\mathbf{\Omega}\right)^{-1} \z \nonumber\\
&\quad + \log\left| \mathbf{\Omega}^\top 
(K_{\x} + \sigma^2_{\text n} \mathbf{I})^{-1} 
\mathbf{\Omega}\right| 
+ k \log 2\pi
\Bigg]. \label{eq:PL_NLL}
\end{align}

\subsection{Information loss in the projected likelihood}
\label{sec:inf_loss}

The following result justifies PL as a loss for learning a GP. 

\begin{theorem}
    Denote $Y$ the random variable of the full GP evaluated on $\x$, and $Z=\bOmega^\top Y$ its projection. The Fisher information loss related to learning $\theta$ through the PL in eq.~\eqref{eq:PL_NLL}, instead of the exact NLL in eq.~\eqref{eq:nll}, is quadratic in $\Sigma_{Y|Z}=\text{Cov}(Y|Z)$.
\end{theorem}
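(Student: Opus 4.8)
The plan is to treat both objectives as negative log-likelihoods of zero-mean Gaussians and compare their Fisher information matrices head-on. Write $\Sigma_Y \defeq K_{\x} + \sigma^2_{\text n}\mathbf{I} = \text{Cov}(Y)$ and $\Sigma_Z \defeq \bOmega^\top\Sigma_Y\bOmega = \text{Cov}(Z)$, and let $\partial_i$ denote differentiation with respect to the hyperparameter $\theta_i$. For a zero-mean Gaussian $\NN(0,\Sigma(\theta))$ the Fisher information matrix has entries $\tfrac12\tr{\Sigma^{-1}(\partial_i\Sigma)\Sigma^{-1}(\partial_j\Sigma)}$, so the first step is merely to instantiate this formula for the full NLL of eq.~\eqref{eq:nll} (with $\Sigma=\Sigma_Y$) and for the projected likelihood (with $\Sigma=\Sigma_Z$, using $\partial_i\Sigma_Z=\bOmega^\top(\partial_i\Sigma_Y)\bOmega$), giving the two information matrices $\mathcal{I}_Y$ and $\mathcal{I}_Z$.

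The central step is a whitening/projection reduction. Introducing the symmetric matrices $M_i\defeq\Sigma_Y^{-1/2}(\partial_i\Sigma_Y)\Sigma_Y^{-1/2}$, the full information collapses to $[\mathcal{I}_Y]_{ij}=\tfrac12\tr{M_iM_j}$. Setting $B\defeq\Sigma_Y^{1/2}\bOmega$, so that $\Sigma_Z=B^\top B$ and $\partial_i\Sigma_Z=B^\top M_i B$, a short cyclic-trace manipulation yields $[\mathcal{I}_Z]_{ij}=\tfrac12\tr{M_i P M_j P}$, where $P\defeq B(B^\top B)^{-1}B^\top$ is the orthogonal projector onto $\operatorname{col}(\Sigma_Y^{1/2}\bOmega)$. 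The identity that ties this to the statement is that the whitened conditional covariance is exactly the complementary projector: starting from $\Sigma_{Y|Z}=\Sigma_Y-\Sigma_Y\bOmega(\bOmega^\top\Sigma_Y\bOmega)^{-1}\bOmega^\top\Sigma_Y$ one verifies $\Sigma_Y^{-1/2}\Sigma_{Y|Z}\Sigma_Y^{-1/2}=\mathbf{I}-P=:S$, an idempotent matrix of rank $n-k$ satisfying $\bOmega^\top\Sigma_{Y|Z}=0$.

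The final step is to substitute $P=\mathbf{I}-S$ into the information loss $\mathcal{I}_Y-\mathcal{I}_Z$ and expand. Using symmetry of $M_i,S$ and cyclicity of the trace, the cross terms pair up and one is left with
\[
[\mathcal{I}_Y-\mathcal{I}_Z]_{ij}=\tfrac12\tr{(M_iM_j+M_jM_i)S}-\tfrac12\tr{M_iSM_jS},
\]
which, after undoing the whitening via $S=\Sigma_Y^{-1/2}\Sigma_{Y|Z}\Sigma_Y^{-1/2}$, is a degree-two polynomial in the entries of $\Sigma_{Y|Z}$, namely a linear plus a quadratic term; invoking idempotency $S=S^2$ in the first trace even rewrites it as a manifestly second-order expression, which is the precise sense of the claim. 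The data-processing inequality $\mathcal{I}_Z\preceq\mathcal{I}_Y$ (valid because $Z$ is a deterministic function of $Y$) provides a sign check and also motivates \emph{why} the loss should be controlled by the residual covariance $\Sigma_{Y|Z}$.

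I expect the main obstacle to be bookkeeping around the projector rather than any deep estimate: since $\Sigma_{Y|Z}$ is rank-deficient, one must avoid its inverse and carry the projector $S$ throughout. The more delicate point is reconciling the objective actually written in eq.~\eqref{eq:PL_NLL}, whose information uses $\bOmega^\top\Sigma_Y^{-1}\bOmega$, with the clean identification above, which uses the \emph{marginal} covariance $\bOmega^\top\Sigma_Y\bOmega$ of eq.~\eqref{eq:GP_pdf_projected}. Hence the first thing I would pin down is that the Fisher information is taken with respect to the law of $Z=\bOmega^\top Y$; otherwise the matrix that appears is the precision-based residual $\Sigma_Y-\bOmega(\bOmega^\top\Sigma_Y^{-1}\bOmega)^{-1}\bOmega^\top$ rather than $\Sigma_{Y|Z}=\text{Cov}(Y|Z)$, and the projector $P$ would instead be onto $\operatorname{col}(\Sigma_Y^{-1/2}\bOmega)$.
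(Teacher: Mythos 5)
Your proof is correct, and it takes a genuinely different route from the paper's. The paper works probabilistically with score functions for a scalar parameter: it establishes $s(\z)=\E_Y[s(Y)\,|\,Z=\z]$, invokes the law-of-total-variance identity $I_Y(\theta)-I_Z(\theta)=\E_Z[\Var(s(Y)|Z)]\geq 0$, then computes $\Var(s(Y)|Z)$ explicitly using the conditional Gaussian moments $\mu_{Y|Z}$, $\Sigma_{Y|Z}$ and the variance formula for Gaussian quadratic forms, finally averaging over $Z$ to obtain
\begin{equation*}
\Delta I = \tr{\tfrac{1}{2}\bigl(\bar K_{\x}\Sigma_{Y|Z}\bigr)^2 + \bar K_{\x}\Sigma_{Y|Z}\bar K_{\x}\,\E[\mu_{Y|Z}\mu_{Y|Z}^\top]},\qquad \bar K_{\x}=K_{\x}^{-1}\dot K_{\x}K_{\x}^{-1}.
\end{equation*}
You instead compute both information matrices directly from the standard formula $\tfrac12\tr{\Sigma^{-1}\partial_i\Sigma\,\Sigma^{-1}\partial_j\Sigma}$ and subtract, with the whitening $M_i=\Sigma_Y^{-1/2}(\partial_i\Sigma_Y)\Sigma_Y^{-1/2}$ and the key identity $\Sigma_Y^{-1/2}\Sigma_{Y|Z}\Sigma_Y^{-1/2}=\mathbf{I}-P$, an idempotent projector of rank $n-k$. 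The two results agree: substituting $\E_Z[\mu_{Y|Z}\mu_{Y|Z}^\top]=\Sigma_Y-\Sigma_{Y|Z}$ into the paper's formula and whitening reduces it exactly to your $\tr{M^2S}-\tfrac12\tr{(MS)^2}$ in the scalar case. What your route buys: it handles the full multivariate information matrix including cross terms $[\mathcal{I}]_{ij}$, $i\neq j$ (the paper's ``follows componentwise'' reduction strictly covers only diagonal entries), it avoids conditional-moment and quadratic-form-variance computations entirely, and the idempotency of $S$ gives a cleaner sense in which the loss is ``quadratic'' in $\Sigma_{Y|Z}$. What the paper's route buys: non-negativity of the loss is built in from the start as an expected conditional variance, and the intermediate identity has independent statistical meaning. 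Your closing observation is also well taken and worth keeping: eq.~\eqref{eq:PL_NLL} as printed uses $\bOmega^\top(K_{\x}+\sigma^2_{\text n}\mathbf{I})^{-1}\bOmega$, which is not the covariance of $Z=\bOmega^\top Y$ from eq.~\eqref{eq:GP_pdf_projected} (and would cost $\mathcal{O}(n^3)$ to form, contradicting the stated $\mathcal{O}(kn^2)$); the paper's own proof, like yours, implicitly works with the law of $Z$, i.e.\ with $\bOmega^\top K_{\x}\bOmega$, so both proofs require reading eq.~\eqref{eq:PL_NLL} as the negative log-density of eq.~\eqref{eq:GP_pdf_projected}.
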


\begin{proof}
We denote $\y$ and $\z$ observations of $Y$ and $Z$, as in the previous section. We consider a scalar kernel parameter $\theta\in\R$, as the proof for multivariate parameters follows componentwise. Recall that the score and Fisher information of $Y$ are respectively $s(\y) :=\partial_\theta \log p(\y)$ and $I_Y(\theta) = \Var(s(Y))$; and equivalently for $s(\z)$ and $I_Z$. By the change of variable theorem $p(\z | \theta) = \int p(\z | \theta) \textbf{1}_{\{\bOmega^\top \y = \z\}}d\y$
and the definition of the score, we have 
\begin{equation}
    s(\z) = \E_Y[s(Y) | Z = \z].
\end{equation}
Furthermore, applying the definition of the Fisher information and the law of total variances, we have the known result \cite{cover2006elements}: 
\begin{equation}
    I_Y(\theta) - I_Z(\theta) = \E_Z[\Var(s(Y)|Z)] \geq 0,
    \label{eq:deltaI}
\end{equation}
which is a general expression for the information lost when learning the parameter $\theta$ from $Z$ rather than $Y$. Denoting $K_\x$ the covariance including the noise term for simplicity, $Y\sim\NN(0,K_\x)$ gives
\begin{equation}
    s(Y) = - \frac{1}{2}\tr{K_{\x}^{-1}\dot K_{\x}} + \frac{1}{2}Y^\top K_{\x}^{-1}\dot K_{\x}K_{\x}^{-1}Y,
    \label{eq:score_Y}
\end{equation}
where $\dot K_{\x} = \partial_\theta K_{\x}$.
To compute $\Var (s(Y)| Z)$, let us first recall that $Y|Z \sim \NN(\mu_{Y|Z}, \Sigma_{Y|Z})$, where 
\begin{align}
    \mu_{Y|Z} &= K_{\x} \bOmega(\bOmega^\top K_{\x} \bOmega)^{-1} Z \label{eq:meanYZ}\\
    \Sigma_{Y|Z} &= K_{\x} - K_{\x} \bOmega(\bOmega^\top K_{\x} \bOmega)^{-1} \bOmega^\top K_{\x},\label{eq:VarYZ}
\end{align}
and that $s(Y)$ in eq.~\eqref{eq:score_Y} has a constant first term and a quadratic term for which the variance can be calculated explicitly. Therefore, 
\begin{align}
    \Var (s(Y)|Z) 
    &= 
    \frac{1}{2}\tr{(K_{\x}^{-1}\dot K_{\x}K_{\x}^{-1}\Sigma_{Y|Z})^2}\nonumber\\
    &+
    \mu_{Y|Z}^\top K_{\x}^{-1}\dot K_{\x}K_{\x}^{-1} \Sigma_{Y|Z} K_{\x}^{-1}\dot K_{\x}K_{\x}^{-1}\mu_{Y|Z}.
\end{align}
Rearranging to express both terms as a trace, and applying the expectation wrt $Y$, we have from eq.~\eqref{eq:deltaI} that $\Delta I := \E_Z[\Var(s(Y)|Z)]$ with:
\begin{align}
   \Delta I 
    &= 
    \tr{\frac{1}{2} (\bar K_{\x} \Sigma_{Y|Z})^2 + \bar K_{\x} \Sigma_{Y|Z} \bar K_{\x}  \E[\mu_{Y|Z}\mu_{Y|Z}^\top]},\label{eq:inf_loss}
\end{align}
where $\E_Z[\mu_{Y|Z}\mu_{Y|Z}^\top] = K_{\x}\bOmega (\bOmega^\top K_{\x}\bOmega)^{-1} \bOmega^\top K_{\x}$, and we denoted $\bar K_{\x} = K_{\x}^{-1}\dot K_{\x}K_{\x}^{-1}$. This gives the explicit dependence of $\Delta I$ on $(\Sigma_{Y|Z})^2$ in eq.~\eqref{eq:VarYZ}, thus concluding the proof.
\end{proof}

\begin{corollary}
    If $k=n$ linearly independent projections are chosen, there is no information loss. 
\end{corollary}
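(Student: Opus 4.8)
The plan is to invoke the explicit formula for the information loss $\Delta I$ derived in the theorem. Equation~\eqref{eq:inf_loss} writes $\Delta I$ as a polynomial in the entries of $\Sigma_{Y|Z}$ in which every summand carries at least one factor of $\Sigma_{Y|Z}$, so it suffices to show that the conditional covariance $\Sigma_{Y|Z}$ vanishes identically when $k=n$ and the projection directions are linearly independent. Once $\Sigma_{Y|Z}=0$ is established, both summands inside the trace of eq.~\eqref{eq:inf_loss}, namely the quadratic $\tfrac{1}{2}(\bar K_{\x}\Sigma_{Y|Z})^2$ and the linear $\bar K_{\x}\Sigma_{Y|Z}\bar K_{\x}\,\E_Z[\mu_{Y|Z}\mu_{Y|Z}^\top]$, vanish, giving $\Delta I=0$ and hence no information loss.

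To show $\Sigma_{Y|Z}=0$, first note that under the stated hypotheses $\bOmega\in\R^{n\times n}$ is square and of full rank $n$, hence invertible. The key algebraic step is to substitute the factorisation $(\bOmega^\top K_{\x}\bOmega)^{-1}=\bOmega^{-1}K_{\x}^{-1}(\bOmega^\top)^{-1}$ into eq.~\eqref{eq:VarYZ}. The projection matrices then cancel, since $K_{\x}\bOmega\,\bOmega^{-1}K_{\x}^{-1}(\bOmega^\top)^{-1}\bOmega^\top K_{\x}=K_{\x}K_{\x}^{-1}K_{\x}=K_{\x}$, so that $\Sigma_{Y|Z}=K_{\x}-K_{\x}=0$.

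A more conceptual route to the same conclusion is to observe that when $\bOmega$ is invertible the map $Z=\bOmega^\top Y$ is a bijective linear transformation, so $Y=(\bOmega^\top)^{-1}Z$ is a deterministic function of $Z$; conditioning on $Z$ therefore pins down $Y$ exactly and forces $\text{Cov}(Y\mid Z)=0$. There is no real obstacle in this proof: the only point meriting care is confirming that the rank condition of the Remark guarantees invertibility of $\bOmega$ itself (a square full-rank matrix is invertible), which is precisely what licenses the factorisation of $(\bOmega^\top K_{\x}\bOmega)^{-1}$ used above.
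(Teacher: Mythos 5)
Your proposal is correct and follows essentially the same route as the paper: the paper's proof likewise notes that when $k=n$ the map $Y\mapsto Z$ is bijective, so $\bOmega$ is invertible, $\Sigma_{Y|Z}$ vanishes, and hence $\Delta I=0$ in eq.~\eqref{eq:inf_loss}. Your explicit cancellation $(\bOmega^\top K_{\x}\bOmega)^{-1}=\bOmega^{-1}K_{\x}^{-1}(\bOmega^\top)^{-1}$ merely fills in the algebraic detail the paper leaves implicit, and your ``conceptual route'' is verbatim the paper's argument.
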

\begin{proof}
    In this case, the map between $Y$ and $Z$ is bijective. Therefore, $\bOmega$ is invertible and $\Sigma_{Y|Z}$ vanishes, implying $\Delta I=0$ in eq.~\eqref{eq:inf_loss}.
\end{proof}

\begin{corollary}
For a white noise setting $p(\y | \x, \theta) 
= \mathcal{N}\!\big(\y| \, 0, \, \sigma^2_{\text n} \mathbf{I}\big)$, and $\bOmega:=\bomega\in\SS^{n-1}$, the PL loss is equivalent to the NLL loss.
\end{corollary}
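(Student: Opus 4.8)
The plan is to substitute the white-noise covariance into eq.~\eqref{eq:PL_NLL} and show that the PL collapses to a one-dimensional Gaussian NLL of exactly the same form as eq.~\eqref{eq:nll}. First I would use the white-noise assumption $p(\y|\x,\theta)=\NN(\y|0,\sigma^2_{\text n}\mathbf{I})$ to replace $K_{\x}+\sigma^2_{\text n}\mathbf{I}$ by $\sigma^2_{\text n}\mathbf{I}$ throughout, so that the only free parameter is $\theta=\sigma^2_{\text n}$ and the kernel $K_{\x}$ contributes nothing to the likelihood.

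Next, since $\bOmega=\bomega$ is a single unit vector on $\SS^{n-1}$, i.e. $\bomega^\top\bomega=1$, the projection acts on a scalar multiple of the identity. The crucial observation is that the precision-projected and marginal-projected covariances then coincide: $\bomega^\top(\sigma^2_{\text n}\mathbf{I})^{-1}\bomega=\sigma^{-2}_{\text n}$, whence $(\bomega^\top(\sigma^2_{\text n}\mathbf{I})^{-1}\bomega)^{-1}=\sigma^2_{\text n}=\bomega^\top(\sigma^2_{\text n}\mathbf{I})\bomega$. By eq.~\eqref{eq:GP_pdf_projected} the projected datum $\z=\bomega^\top\y$ is therefore distributed exactly as $\NN(0,\sigma^2_{\text n})$; that is, the projected model is again white noise carrying the same variance parameter.

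Plugging this scalar covariance and $k=1$ into eq.~\eqref{eq:PL_NLL}, I expect the PL to reduce to $\tfrac12[\z^2/\sigma^2_{\text n}+\log\sigma^2_{\text n}+\log 2\pi]$, i.e. the exact negative log-likelihood of $\z\sim\NN(0,\sigma^2_{\text n})$. This is structurally identical to eq.~\eqref{eq:nll} with $(n,\y)$ replaced by $(1,\z)$, so minimising the PL solves the same white-noise variance-estimation problem as the full NLL; in particular $\E[\z^2]=\bomega^\top(\sigma^2_{\text n}\mathbf{I})\bomega=\sigma^2_{\text n}=\E[\|\y\|^2/n]$, so both objectives yield unbiased estimates of $\sigma^2_{\text n}$, which is the asserted equivalence.

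The step I expect to be the main obstacle is the second one: confirming that the covariance appearing in eq.~\eqref{eq:PL_NLL} genuinely reduces to the true marginal variance $\sigma^2_{\text n}$ of $\z$ rather than to some other surrogate. This is exactly where both the scalar-multiple-of-identity structure of the white-noise covariance and the unit-norm constraint $\bomega\in\SS^{n-1}$ are indispensable: for a general kernel or a non-unit projection the precision-projected and marginal-projected covariances differ, and the PL would only approximate, rather than equal, the projected NLL.
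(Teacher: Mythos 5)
Your proof is correct and takes essentially the same route as the paper's: substitute the white-noise covariance into eq.~\eqref{eq:GP_pdf_projected}, use $\bomega^\top\bomega=1$ to conclude $\z\sim\NN(0,\sigma^2_{\text n})$, and identify the PL with the scalar Gaussian NLL $\tfrac12\left[\z^2/\sigma^2_{\text n}+\log(2\pi\sigma^2_{\text n})\right]$. If anything, your conclusion is stated more carefully than the paper's, which asserts the identity $\L(\theta)=n\L_{\text{PL}}(\theta)$ whose quadratic terms $\|\y\|^2$ and $n\z^2$ coincide only in expectation---precisely the unbiasedness/structural sense of equivalence that you make explicit.
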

\begin{proof}
    From eq.~\eqref{eq:GP_pdf_projected}, we have $p(\z | \x, \theta)  = \mathcal{N}\!\big(\z| \, 0,\bomega^T\sigma^2_{\text n} \mathbf{I}\bomega\big) = \mathcal{N}\!\big(\z| \, 0,\sigma^2_{\text n}\big)$, and therefore $\mathcal{L}(\theta) = n\mathcal{L}_{\text{PL}}(\theta)$.
\end{proof}

Since our focus is on $k\ll n$, notice that $\Sigma_{Y|Z}$ is the residual of a $k$-rank approximation of the $n$-rank covariance $K_{\x}$. Therefore, the optimal projection directions are the $k$-top eigenvectors of $K_\x$---which are unknown before training. We explore this in detail and provide a criterion for choosing $\bOmega$ justified experimentally in Sec.\ref{sec:choose_proj}.

\subsection{Interpretation and relationship to previous methods}

A direct connection between the PL and sparse GPs using inducing variables is that the projected variables $\z\in\R^k$ can be considered as interdomain inducing variables \cite{NIPS2009_5ea1649a}. Therefore, PL assumes that the GP is sparse in the transformed domain since, beyond a certain value of $n$, additional observations provide minimal information about $\theta$ and can be discarded to pursue computational efficiency. Sparse GP methods achieve this by constructing \emph{trainable} interdomain/temporal inducing variables, which involves an extra computational overhead. Conversely, PL relies on a fixed set of directions, thus avoiding the training of additional optimisation variables. As a result, and as verified by our experiments, PL exhibits a reduced computational cost in reasonably large datasets compared to VFE in practice, despite its theoretical computational complexity. Furthermore, PL achieves an NLL that is closer to that of the exact solution. 

\section{Design of projection directions}
\label{sec:choose_proj}
We aim to choose the projections to minimise the information loss in Sec.~\ref{sec:inf_loss}. Since both terms in eq.~\eqref{eq:inf_loss} are positive definite matrices, $\Delta I$ can be reduced by minimising the norm of $\Sigma_{Y|Z}$ in eq.~\eqref{eq:VarYZ}, or equivalently, its trace. 

Denote the eigendecomposition $K_\x=U\Lambda U^\top$, with $U\in\R^{n\times n}$ an orthogonal eigenvectors matrix, and $\Lambda = \text{diag}(\lambda_1,\ldots,\lambda_n)$ are the eigenvalues. Since any choice of a $k$-rank  projection matrix will span a $k$-dimensional subset of the span of $U$, the projection matrix can be expressed without loss of generality as the choice of $k$ eigenvectors of $K_\x$, denoted by $\bOmega = U_k\in\R^{n\times k}$, with corresponding diagonal eigenvalue matrix $\Lambda_k$. Plugging this into eq.~\eqref{eq:VarYZ}, we have 
\begin{align}
    \Sigma_{Y|Z} &= 
    U \Lambda U^\top - U \Lambda U^\top U_k(U_k^\top U \Lambda U^\top U_k)^{-1} U_k^\top U \Lambda U^\top,\nonumber \\
    &= 
    U (\Lambda - \Lambda_k)U,
\end{align}  
where the last step follows from the product $U^\top U_k$ giving a masked identity matrix containing ones only at the coordinates corresponding to the chosen eigenvalues. Then for any $A\subset \{\lambda_1,\ldots,\lambda_n\}$,  
\begin{equation}
    \min_{|A| = k} \sum_{\lambda \in A} \lambda  \leq  \tr{\Sigma_{Y|Z}} \leq   \max_{|A| = k} \sum_{\lambda \in A} \lambda,
\end{equation}
meaning that the norm of $\Sigma_{Y|Z}$ is bounded by the worst ($k$ smallest eigenvalues) and best ($k$ largest eigenvalues) choice of projection directions given by the eigenvectors of $K_\x$. Since realising this (best) choice is impossible in practice since $K_\x$ is updated during training, we experimentally show that several random projections with different structures effectively reduce $\tr{\Sigma_{Y|Z}}$.

Let us consider the following orthogonal projections: 

\begin{itemize}
    \item \textbf{Sphere:} $\bomega_i$ uniformly distributed on $\SS^{n-1}$
    \item \textbf{Repulsive:} as above, with pairwise repulsion
    \item \textbf{Localised:} Gaussian RBFs covering the range of observations
    \item \textbf{One hot:} each projection contains only one observation
\end{itemize}

We constructed $K_\x$ as the Gram matrix of a square exponential (SE) kernel ($\sigma^2=5, l = 10, \sigma_{\text n}^2 = 0.1$) evaluated on $n=2000$ points. Fig.~\ref{fig:conditional_var} shows the spectrums of $K_\x$ (gold) and $\Sigma_{Y|Z}$ (blue) for $k\in\{50,100,200\}$. Observe how all the projection structures critically reduce the trace of $\Sigma_{Y|Z}$, with the spherical projection providing the best reduction and the naive one-hot projection providing the worst one due to capturing fewer datapoints. Consequently, we will consider spherical projections in the implementation of PL. 
\begin{figure}
    \centering
    \includegraphics[width=0.49\linewidth]{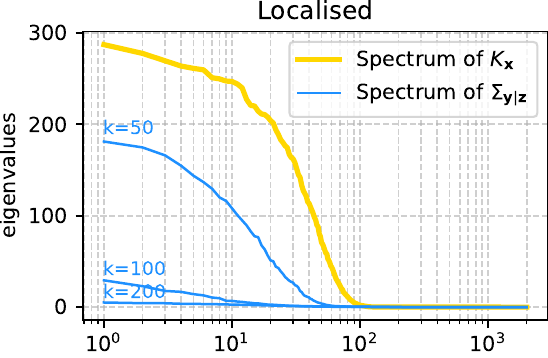}
    \includegraphics[width=0.49\linewidth]{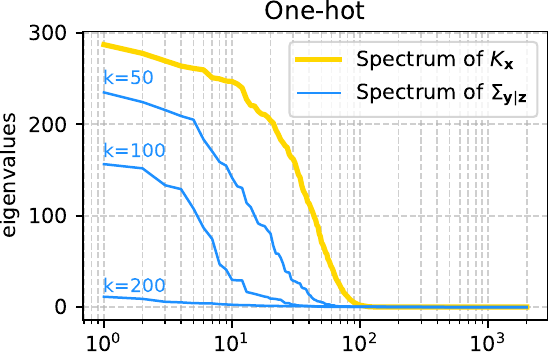}
    \includegraphics[width=0.49\linewidth]{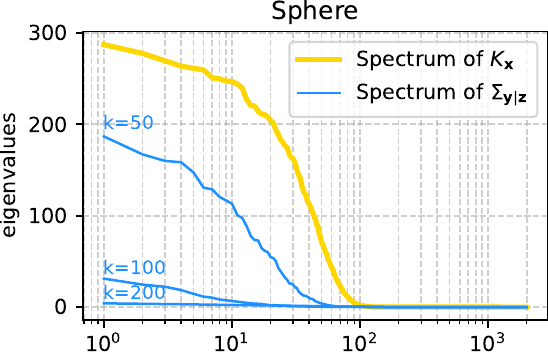}
    \includegraphics[width=0.49\linewidth]{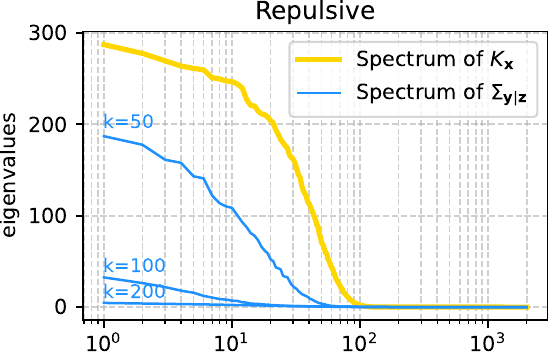}
    \caption{Eigenspectrum of $K_\x$ and $\Sigma_{Y|Z}$ using different types and number of projection matrices $\bOmega$.}
    \label{fig:conditional_var}
\end{figure}

\section{Experiments}

We validated the proposed PL loss in eq.~\eqref{eq:PL_NLL} with spherical projection against VFE in three experiments. All simulations were executed on a Macbook Pro (M3 processor) using only the CPU. Code available here:  \href{https://github.com/felipe-tobar/projected-likelihood}{\textbf{github.com/felipe-tobar/projected-likelihood}}.

\subsection{E1: Different optimisers and variance assessment}

We trained VFE and PL, with their respective parameter $k=m=100$, on a 1000-sample dataset generated by a GP with SE kernel ($\sigma^2=1, l = 20, \sigma_{\text n}^2 = 0.1$). We considered BFGS (50 iters, lr = $5\cdot10^{-3}$) and Adam \cite{adam} (2000 iters, lr= $1\cdot10^{-1}$) optimisers; the stopping criteria for both was to have 5 iterations with less than $1\cdot10^{-2}$ improvement. Table \ref{tab:exp1} shows the NLL and running time. In all cases, PL achieved an NLL  closer to the exact GP baseline than VFE, while also being faster due to the reduced number of iterations required. Given the better performance of Adam, we chose this optimiser for the remaining experiments. 

\begin{table}[ht]
\scriptsize
\centering
\caption{NLL and running time for different methods and optimisers}
\begin{tabular}{lccc|ccc}
\toprule
& \multicolumn{3}{c|}{BFGS} & \multicolumn{3}{c}{Adam} \\
\cmidrule(lr){2-4} \cmidrule(lr){5-7}
Metric & ML & VFE & PL & ML & VFE & PL \\
\midrule
NLL ($\downarrow$)  & 337.4 & 358.5 & \textbf{349.4} & 337.4 & 358.1 & \textbf{347.0} \\
Time [s] ($\downarrow$) & 26.5 & 4.95 & \textbf{3.76} & 3.66 & 1.57 & \textbf{0.574} \\
\bottomrule
\end{tabular}
\label{tab:exp1}
\end{table}

Additionally, Fig.~\ref{fig:exp1} shows the posterior variance of the GPs learnt by the considered methods. In line with previous findings \cite{bauer2016understanding}, VFE overestimated the posterior variance of the process, while PL's variance remained closer to the ML estimate.

\begin{figure} 
    \centering 
    
    \includegraphics[height=0.49\linewidth]{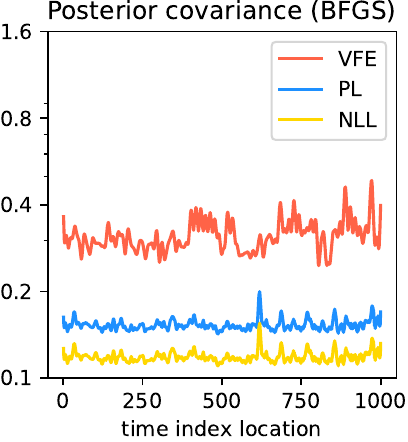}
    \includegraphics[height=0.49\linewidth]{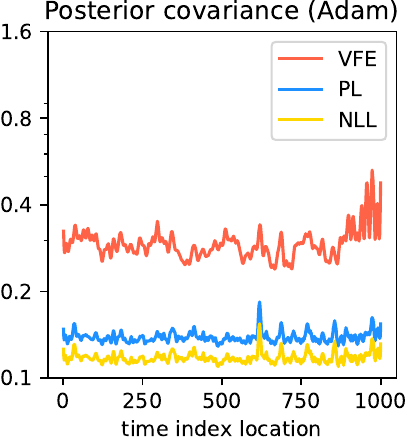}
    \caption{Posterior variance of the GP learnt by ML, VFE and proposed PL methods, using the Adam and BFGS optimisers.}
    \label{fig:exp1}
\end{figure}

\subsection{E2: Effect of the number of projections}

We then compared PL and VFE for an increasing number of observations, with both $k$ (PL) and $m$ (VFE) in the set ${50,100,150}$. Fig.~\ref{fig:exp2} shows the achieved NLL ($y$-axis) vs the computation time ($x$-axis) in log-log scale for six experiments with observations $n\in\{500,1000,1500,2000,3000,4000\}$. In each plot, VFE (red) and PL (blue) are compared to the exact ML solution (yellow). 

In all cases, both VFE and PL achieved the same NLL as the exact ML method for $m=k=150$. Critically, PL achieved the desired NLL quicker than VFE for $n<3000$, a difference that was gradually reduced as $n$ increased, where both methods performed on par in accuracy and computational cost. This is a consequence of PL's quadratic cost and VFE's linear cost, which eventually coincide for growing $n$. Accordingly, this experiment confirms the superiority of PL in moderately-sized datasets.

\begin{figure}[t]
    \centering 
    \includegraphics[width=1\linewidth]{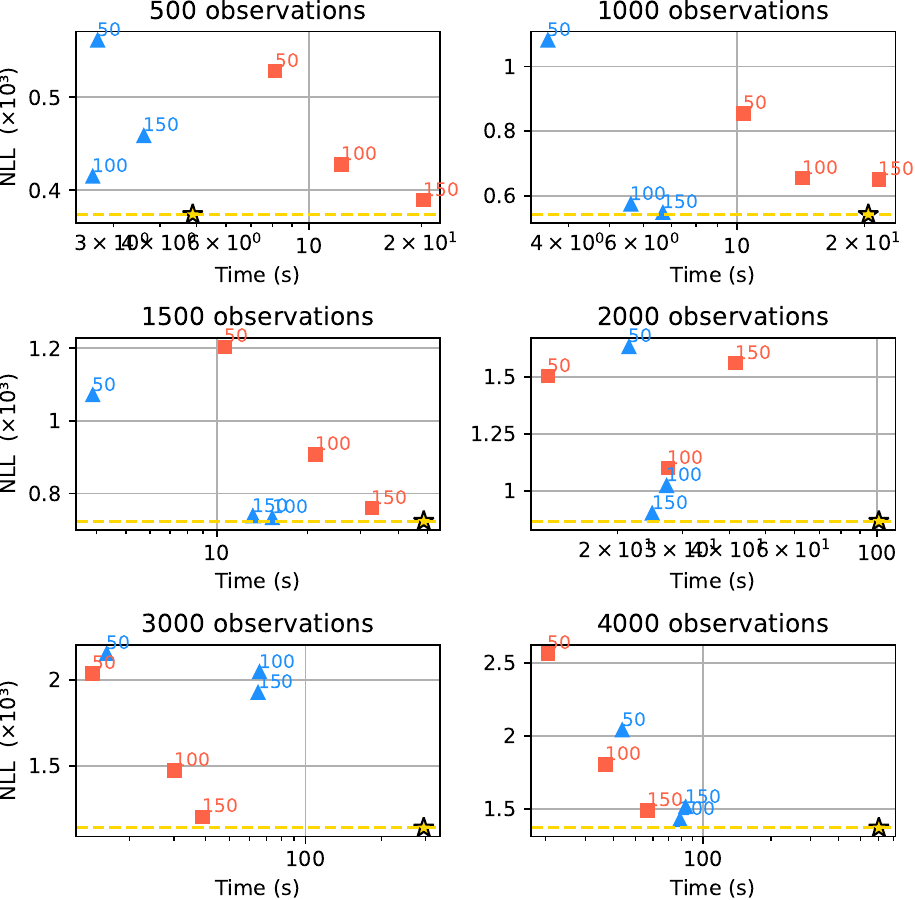}
    \caption{Performance [NLL] versus computation time [seconds] for VFE (red squares) and PL (blue triangles). The orders $m$ (VFE) and $k$ (PL) are denoted in each marker. The ML solution is denoted with a yellow star, with the achieved NLL in a dashed yellow line. The closer to the bottom-left corner, the better.}
    \label{fig:exp2}
\end{figure}

\subsection{E3: Real-world series and different kernels}

\begin{table}[ht]
\centering
\scriptsize
\caption{Sunspots (3319 datapoints): Performance and running time }
\begin{tabular}{lcccccc}
\toprule
Kernel & \multicolumn{2}{c}{ML} & \multicolumn{2}{c}{VFE ($m=100$)} & \multicolumn{2}{c}{PL ($k=100$)} \\
\cmidrule(lr){2-3} \cmidrule(lr){4-5} \cmidrule(lr){6-7}
 & NLL & Time & NLL & Time & NLL & Time \\
\midrule
SE & 1549.120 & 197.15 & 1691.999 & \textbf{6.78} & \textbf{1584.565} & 19.62 \\
Laplace & 1432.688 & 172.38 & 1944.358 & 69.06 & \textbf{1675.062} & \textbf{31.46} \\
RQ & 1531.909 & 212.07 & 1765.050 & \textbf{24.18} & \textbf{1588.746} & 42.01 \\
LocPer & 1510.357 & 231.84 & 1769.835 & 36.89 & \textbf{1547.013} & \textbf{31.26} \\
\bottomrule
\end{tabular}
\label{tab:sunspots}
\end{table}

\begin{table}[H]
\centering
\scriptsize
\caption{EEG: Performance and running time (8000 datapoints)}
\begin{tabular}{lcccc}
\toprule
Kernel & \multicolumn{2}{c}{VFE} & \multicolumn{2}{c}{PL} \\
\cmidrule(lr){2-3} \cmidrule(lr){4-5}
 & RMSE & Time (s) & RMSE & Time (s) \\
\midrule
SE & 0.303 & \textbf{148.08} & \textbf{0.194} & 306.22 \\
Laplace & 0.270 & \textbf{614.32} & \textbf{0.145} & 630.61 \\
RQ & 0.285 & 402.32 & \textbf{0.192} & \textbf{337.68} \\
LocPer & 0.289 & 611.61 & \textbf{0.170} & \textbf{554.79} \\
\bottomrule
\end{tabular}
\label{tab:eeg}
\end{table}

We then considered two real-world datasets: the 3319-sample monthly sunspot series \cite{SILSO_Sunspot_Number} and a 10000-sample subset of the Helsinki neonatal EEG dataset, \cite{stevenson2018neonatal}. We implemented VFE and PL to train GPs on two real-world datasets using four different kernels: SE, Laplace, Rational Quadratic (RQ) and Locally Periodic (LocPer). For the sunspots, the baseline is the NLL achieved by exact ML, while for the EEG we provide the relative mean square error RNMSE (8000 training points and 2000 validation points), since exact GP training on 10000 observations was unfeasible. 

Tables \ref{tab:sunspots} and \ref{tab:eeg} show the performance of all methods considered. Even though a similarity in computational cost between VFE and PL arises for these large datasets, the superior accuracy of PL (either in NLL or RMSE) is confirmed for all datasets and kernels considered.

\section{Conclusions}

    We have proposed the PL method for computationally-efficient GP learning via low-dimensional projections of the data, and have quantified its information loss wrt to exact training. Despite PL's complexity being $\mathcal{O}(nk^2)$, as opposed to sparse GP's $\mathcal{O}(nm^2)$ cost, we have shown that the PL is competitive against VFE in datasets up to 8000 training points, while effectively providing lower NLL or RMSE. We observed that PL's computational gain came from requiring fewer optimisation steps (similar to the exact GP), unlike sparse GP, which computes a cheaper objective but requires more iterations. This calls for assessing approximate-likelihood methods not only in terms of the evaluation of the loss, but also in the complexity of their loss landscape.  
    Conceptually, our study validates the advantage of compressed data representations in GP learning, where natural redundancies in large-scale time series are discarded (see also \cite{bandedCOV}), thus reducing computational cost while minimally sacrificing learning accuracy.

\clearpage

\section{Acknowledgment}
This work is partially supported by the AI cluster ANITI (ANR-23-IACL-0002) and by the France 2030 program (ANR-23-PEIA-0004).

\bibliographystyle{IEEEbib}
\bibliography{references}
\end{document}